\pdfoutput=1

\documentclass[11pt]{article}

\usepackage[]{EMNLP2022}

\usepackage{times}
\usepackage{latexsym}
\usepackage{hyperref}

\usepackage[T1]{fontenc}
\usepackage{graphicx}
\usepackage{pifont}
\newcommand{\cmark}{\ding{51}}
\newcommand{\xmark}{\ding{55}}

\usepackage[utf8]{inputenc}

\usepackage{amsmath}
\usepackage{amssymb}

\usepackage{booktabs}
\usepackage{multirow, multicol}
\usepackage{microtype}
\usepackage{cleveref}

\crefformat{section}{\S#2#1#3}
\crefformat{subsection}{\S#2#1#3}

\usepackage{amsfonts}
\usepackage{subfig}
\usepackage{enumitem}
\usepackage{amsthm}
\usepackage{soul}

\usepackage[ruled,vlined,linesnumbered]{algorithm2e}
%
%

\newcommand{\modelname}{MM-Align}

\newcommand{\declarelogo}[0]{\includegraphics[height=.02\textwidth]{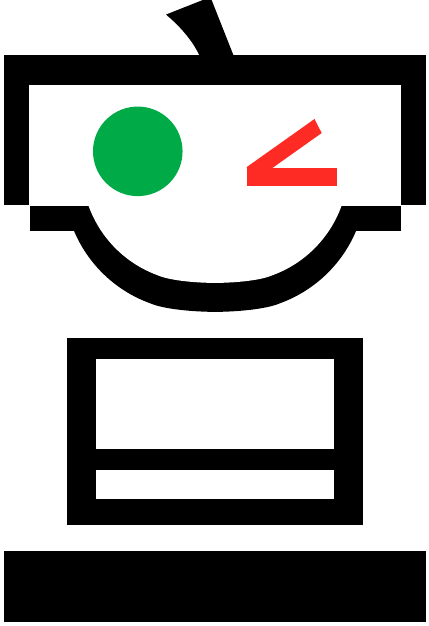}}

\title{MM-Align: Learning Optimal Transport-based Alignment Dynamics for Fast and Accurate Inference on Missing Modality Sequences}

\author{
    Wei Han\textsuperscript{\declarelogo} \quad
    Hui Chen\textsuperscript{\declarelogo} \quad
    Min-Yen Kan\textsuperscript{$\clubsuit$} \quad
    Soujanya Poria\textsuperscript{\declarelogo} \\
    \textsuperscript{\declarelogo} DeCLaRelab, Singapore University of Technology and Design, Singapore \\
    \textsuperscript{$\clubsuit$} National University of Singapore, Singapore \\
    \texttt{\{wei\_han,hui\_chen\}@mymail.sutd.edu.sg} \\
    \texttt{kanmy@comp.nus.edu.sg,sporia@sutd.edu.sg}
}

\begin{document}
\maketitle
\begin{abstract}
    Existing multimodal tasks mostly target at the \textit{complete input modality} setting, i.e., each modality is either \textit{complete} or \textit{completely missing} in both training and test sets.
    However, the randomly missing situations have still been underexplored.
    In this paper, we present a novel approach named~\modelname~to address the missing-modality inference problem.
    Concretely, we propose 1) an alignment dynamics learning module based on the theory of optimal transport (OT) for indirect missing data imputation; 2) a denoising training algorithm to simultaneously enhance the imputation results and backbone network performance.
    Compared with previous methods which devote to reconstructing the missing inputs,~\modelname~learns to capture and imitate the alignment dynamics between modality sequences.
    Results of comprehensive experiments on three datasets covering two multimodal tasks empirically demonstrate that our method can perform more accurate and faster inference and relieve overfitting under various missing conditions. 
    Our code is available at \url{https://github.com/declare-lab/MM-Align}.
\end{abstract}
\section{Introduction}
The topic of multimodal learning has grown unprecedentedly prevalent in recent years~\citep{ramachandram2017deep,baltruvsaitis2018multimodal}, ranging from a variety of machine learning tasks such as computer vision~\cite{zhu2017toward, nam2017dual}, natural langauge processing~\cite{fei2021cross, ilharco2021recognizing}, autonomous driving~\cite{caesar2020nuscenes} and medical care~\cite{nascita2021xai}, etc.
Despite the promising achievements in these fields, most of existent approaches assume a \textit{complete input modality} setting of training data, in which every modality is either \textit{complete} or \textit{completely missing} (at inference time) in both training and test sets~\citep{pham2019found,tang-etal-2021-ctfn, zhao-etal-2021-missing}, as shown in Fig.~\ref{complete}~and~\ref{completelymissing}.
Such synergies between train and test sets in the modality input patterns are usually far from the realistic scenario where there is \textit{a certain portion} of data without parallel modality  sequences, probably due to noise pollution during collecting and preprocessing time. 
\begin{figure}[t]
\newcommand{\figah}{6cm}
\newcommand{\sca}{0.32}
\makeatletter
\newcommand{\definetrim}[2]{%
  \define@key{Gin}{#1}[]{\setkeys{Gin}{trim=#2}}%
}
\makeatother
\definetrim{trimp}{1.7cm 0.7cm 1.0cm 1.0cm}
\definetrim{trimpr}{1.0cm 0.7cm 1.7cm 1.0cm}
\centering
\subfloat[\label{complete}]{\includegraphics[page=1,scale=\sca, trimp]{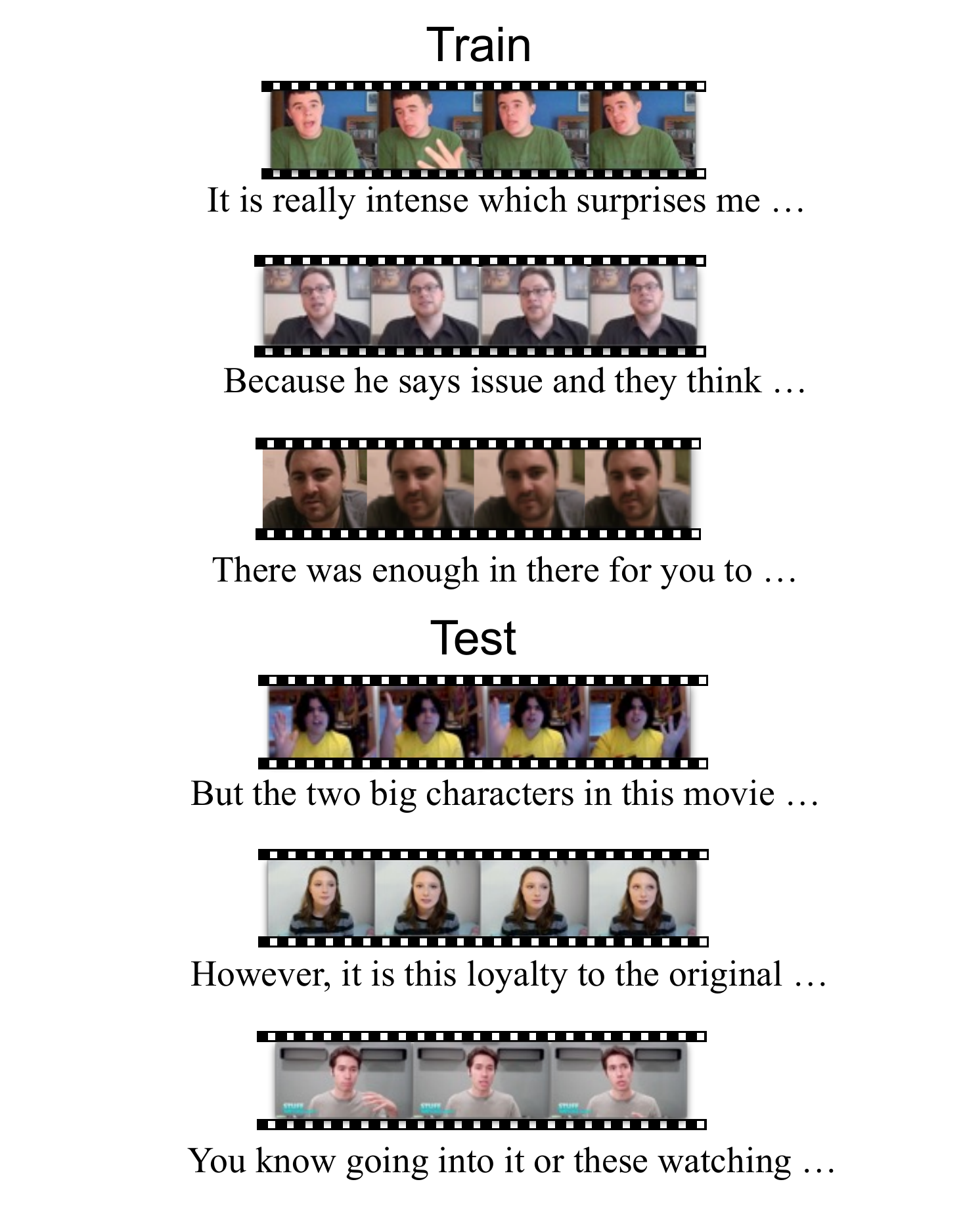}}
\subfloat[\label{completelymissing}]{\includegraphics[page=2,scale=\sca, trimpr]{figs/problem_setting.pdf}}
\par
\subfloat[\label{ours}]{\includegraphics[page=4,scale=\sca, trimp]{figs/problem_setting.pdf}}
\subfloat[\label{smil}]{\includegraphics[page=3, scale=\sca, trimpr]{figs/problem_setting.pdf}}
\caption{
Input patterns of different modality inference problems. Here visual modality is the victim modality that may be missing randomly. (a) modalities are both complete in train and test set; 
(b) modalities are both complete in the train set but the victim modality is completely missing in the test set; 
(c) victim modality is missing randomly in the train set but completely missing in the test set;
(d) modalities are missing with the same probability in train and test set.
}
\end{figure}

In other words, data from each modality are more probable to be \textit{missing at random} (Fig.\ref{ours}~and~\ref{smil}) than \textit{completely present or missing} 
(Fig.\ref{complete}~and~\ref{completelymissing})~\citep{pham2019found,tang-etal-2021-ctfn, zhao-etal-2021-missing}.
Based on the \textit{complete input modality} setting, a family of popular routines regarding the missing-modality inference is to design intricate generative modules attached to the main network and train the model under full supervision with complete modality data. 
By minimizing a customized reconstruction loss, the data restoration (a.k.a. missing data imputation~\cite{van2018flexible}) capability of the generative modules is enhanced~\citep{pham2019found,wang2020transmodality, tang-etal-2021-ctfn} so that the model can be tested in the missing situations (Fig.~\ref{completelymissing}).
However, we notice that (i) if modality-complete data in the training set is scarce, a severe overfitting issue may occur, especially when the generative model is large~\cite{robb2020few,schick2021few,ojha2021few};
(ii) global attention-based (i.e., attention over the whole sequence) imputation may bring unexpected noise since true correspondence mainly exists between temporally adjacent parallel signals~\cite{sakoe1978dynamic}.
\citet{ma2021smil} proposed to leverage unit-length sequential representation to represent the missing modality from the seen complete modality from the input for training.
Nevertheless, such kinds of methods inevitably overlook the \textit{temporal correlation between modality sequences} and only acquire fair performance on the downstream tasks. 

To mitigate these issues, in this paper we present MM-Align, a novel framework for fast and effective multimodal learning on randomly missing multimodal sequences. 
The core idea behind the framework is to \textit{imitate} some indirect but informative clues for the paired modality sequences instead of learning to restore the missing modality directly. 
The framework consists of three essential functional units: 
1) a backbone network that handles the main task;
2) an alignment matrix solver based on the optimal transport algorithm to produce \textit{context-window} style solutions only part of whose values are non-zero and an associated meta-learner to imitate the dynamics and perform imputation in the modality-invariant hidden spaces; 
3) a denoising training algorithm that optimizes and coalesces the backbone network and the learner so that they can work robustly on the main task in missing-modality scenarios.
To empirically study the advantages of our models over current imputation approaches, we test on two settings of the random missing conditions, as shown in Fig.~\ref{ours}~and~Fig.~\ref{smil}, for all possible modality pair combinations. 
To the best of our knowledge, it is the first work that applies optimal transport and denoising training to the problem of inference on missing modality sequences.
In a nutshell, the contribution of this work is threefold:
\begin{itemize}[leftmargin=*]
    \item We propose a novel framework to facilitate the missing modality sequence inference task,
    where we devise an alignment dynamics learning module based on the theory of optimal transport and a denoising training algorithm to coalesce it into the main network.
    \item We design a loss function that enables a context-window style solution for the dynamics solver.
    \item We conduct comprehensive experiments on three publicly available datasets from two multimodal tasks. Results and analysis show that our method leads to a faster and more accurate inference of missing modalities.
\end{itemize}
\section{Related Work}
\subsection{Multimodal Learning}
Multimodal learning has raised prevalent concentration as it offers a more comprehensive view of the world for the task that researchers intend to model~\cite{atrey2010multimodal,lahat2015multimodal,sharma2020multimodal}.
The most fundamental technique in multimodal learning is multimodal fusion~\cite{atrey2010multimodal}, which attempts to extract and integrate task-related information from the input modalities into a condensed representative feature vector. 
Conventional multimodal fusion methods encompass cross-modality attention~\cite{tsai2018learning,tsai2019multimodal,han2021bi}, matrix algebra based method~\cite{zadeh2017tensor,liu2018efficient,liang2019learning} and invariant space regularization~\cite{colombo-etal-2021-improving,han2021improving}.
While most of these methods focus on complete modality input, many take into account the missing modality inference situations~\cite{pham2019found,wang2020transmodality,ma2021smil} as well, which usually incorporate a generative network to impute the missing representations by minimizing the reconstruction loss.
However, the formulation under missing patterns remains underexplored, and
that is what we dedicate to handling in this paper.

\subsection{Meta Learning}
Meta-learning, or learning to learn, is a hot research topic that focuses on how to generalize the learning approach from a limited number of visible tasks to broader task types.
Early efforts to tackle this problem are based on comparison, such as relation networks~\cite{sung2018learning} and prototype-based methods~\cite{snell2017prototypical,qi2018low,lifchitz2019dense}.
Other achievements reformulate this problem as transfer learning~\cite{sun2019meta} and multi-task learning~\cite{pentina2015curriculum,tian2020rethinking}, which devote to seeking an effective transformation from previous knowledge that can be adapted to new unseen data, and further fine-tune the model on the handcrafted hard tasks.
In our framework, we treat the alignment matrices as the training target for the meta-learner.
Combined with a self-adaptive denoising training algorithm, the meta-learner can significantly enhance the predictions' accuracy in the missing modality inference problem. 

\section{Method}
\subsection{Problem Definition}
Given a multimodal dataset $\mathcal{D}=\{\mathcal{D}^{train},\mathcal{D}^{val}, \mathcal{D}^{test}\}$, where $\mathcal{D}^{train},\mathcal{D}^{val}, \mathcal{D}^{test}$ are the training, validation and test set, respectively.
In the training set $\mathcal{D}^{train}=\{(x_i^{m_1},x_i^{m_2},y_i)_{i=1}^n\}$, where $x_i^{m_k} = \{x_{1,1}^{m_k},...,x_{i,t}^{m_k}\}$ are input modality sequences and $m_1,m_2$ denote the two modality types, some modality inputs are missing with probability $p'$. 
Following \citet{ma2021smil}, we assume that modality $m_1$ is complete and the random missing only happens on modality $m_2$, which we call the \textit{victim modality}.
Consequently, we can divide the training set into the complete and missing splits, denoted as $\mathcal{D}^{train}_{c}=\{(x_i^{m_1},x_i^{m_2},y_i)_{i=1}^{n_c}\}$ and $\mathcal{D}^{train}_{m}=\{(x_{i}^{m1},y_i)_{i=n_c+1}^{n}\}$, where $\vert \mathcal{D}^{train}_m\vert/ \vert \mathcal{D}^{train}\vert=p'$.
For the validation and test set, we consider two settings:
a) the victim modality is missing \textit{completely} (Fig.~\ref{ours}), denoted as ``setting A'' in the experiment section;
b) the victim modality is missing with the same probability $p'$ (Fig.~\ref{smil}), denoted as ``Setting B'', in line with~\citet{ma2021smil}.
We consider two multimodal tasks: sentiment analysis and emotion recognition, in which the label $y_i$ represents the sentiment value (polarity as positive/negative and value as strength) and emotion category, respectively.
\begin{figure}[t]
    \centering
    \includegraphics[width=\linewidth,trim=1.6cm 0 1.6cm 0]{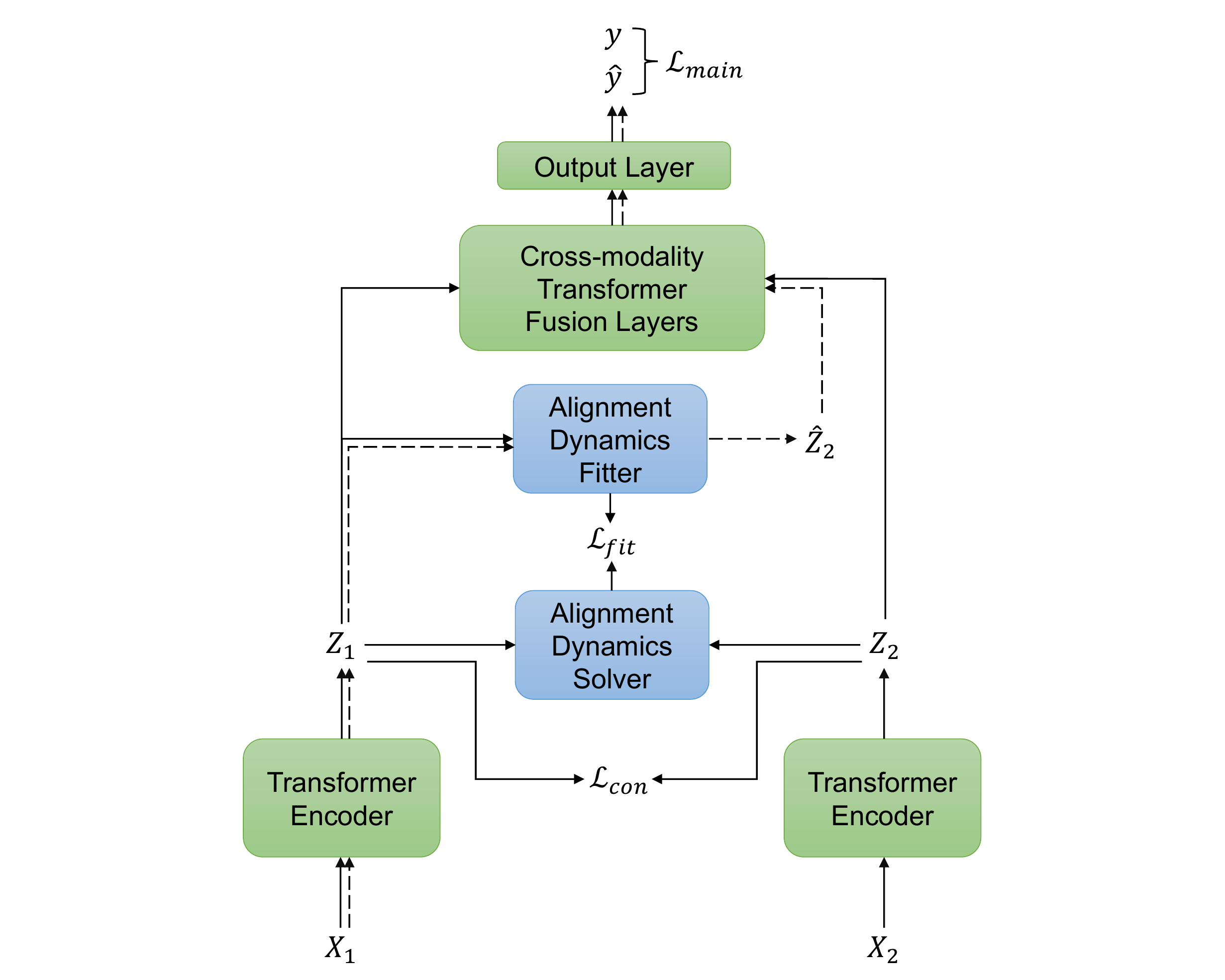}
    \caption{Overall architecture of our framework. Solid lines are the forward paths when training on the modality-complete split and dashed lines are the forward paths when training and testing on the split with missing modality.}
    \label{fig:model_architecture}
\end{figure}

\subsection{Overview}
Our framework encompasses a backbone network (green), an alignment dynamics learner (ADL, blue), and a denoising training algorithm to optimize both the learner and backbone network concurrently.
We highlight the ADL which serves as the core functional unit in the framework.
Motivated by the idea of meta-learning, we seek to generate substitution representations for the missing modality through an indirect imputation clue, i.e., alignment matrices, instead of learning to restore the missing modality by minimizing the reconstruction losses.
To this end, the ADL incorporates an alignment matrix solver based on the theory of optimal transport~\cite{villani2009optimal}, a non-parametric method to capture alignment dynamics between time series~\cite{peyre2019computational,chi2021improving}, as well as an auxiliary neural network to fit and generate meaningful representations as illustrated in~\cref{sec:ADL}.

\subsection{Architecture}
\paragraph{Backbone Network}
The overall architecture of our framework is depicted in Fig.~\ref{fig:model_architecture}.
We harness MulT~\cite{tsai2019multimodal}, a fusion network derived from Transformer~\cite{vaswani2017attention} as the backbone structure since we find a number of its variants in preceding works acquire promising outcomes in multimodal~\cite{wang2020transmodality,han2021bi,tang-etal-2021-ctfn}. 
MulT has two essential components: the unimodal self-attention encoder and bimodal cross-attention encoder. 
Given modality sequences $x^{m_1}, x^{m_2}$ (for unimodal self-attention we have $m_1 = m_2$) as model's inputs, after padding a special token $x_0^{m_1}=x_0^{m_2} = $\texttt{[CLS]} to their individual heads, a single transformer layer~\cite{vaswani2017attention} encodes a sequence through a multi-head attention (MATT) and feed-forward network (FFN) as follows:
\begin{align}
    Q = x^{m_1} & W_Q, K = x^{m_2}W_K, V=x^{m_2}W_V 
    \label{eq:1} \\
    \hat{Z}^{21} &= \mathrm{MATT}(Q,K,V)+x^{m_1} \\
    Z^{21} &= \mathrm{FFN}(\hat{Z}^{21})+ \mathrm{LN}(\hat{Z}^{21}) \label{eq:3}
\end{align}
where LN is layer normalization. 
In our experiments, we leverage this backbone structure for both input modality encoding and multimodal fusion.

\paragraph{Output Layer} We extract the head embeddings $z^{12}_0,z^{21}_0$ from the output of the fusion network as features for regression. 
The regression network is a two-layer feed-forward network: 
\begin{equation}
    \hat{y} = W_2(\textrm{tanh}(W_1[z_0^{12}, z_0^{21}]+b_1)+b_2
    \label{eq:4}
\end{equation}
where $[\boldsymbol{\cdot},\boldsymbol{\cdot},\cdots]$ is the concatenation operation.
The mean squared error (MSE) is adopted as the loss function for the regression task:
\begin{equation}
    \mathcal{L}_{main} = \mathrm{MSE}(\hat{y}, y)
    \label{eq:5}
\end{equation}

\subsection{Alignment Dynamics Learner (ADL)}
\label{sec:ADL}
The learner has two functional modules, named as \emph{alignment dynamics solver} and \emph{fitter}, as shown in Fig.~\ref{fig:model_architecture}. 
It also runs in two functional modes, namely learning and decoding.
ADL works in learning mode when the model is trained on the complete data (marked by the solid lines in Fig.~\ref{fig:model_architecture}). 
The decoding mode is triggered when one of the modalities is missing, which happens in the training time on the missing splits and the entire test time (marked by the dashed lines in Fig.~\ref{fig:model_architecture}).
\paragraph{Learning Mode} In the learning mode, the solver calculates an alignment matrix which provides the information about temporal correlations between the two modality sequences. 
Similar to the previous works~\cite{peyre2019computational,chi2021improving}, this problem can be formulated as an optimal transport (OT) task:
\begin{equation}
    \min_A \sum_{i,j} A_{ij}M_{ij}
\label{eq:goal}
\end{equation}
where $A$ is the transportation plan that implies the alignment information~\cite{peyre2019computational} and $M$ is the cost matrix. 
The subscript $ij$ represents the component from the $i$th timestamp in the source modality to the $j$ th timestamp in the target modality.
Different from~\citet{peyre2019computational} and~\citet{chi2021improving} which allow alignment between any two positions of the two sequences, we believe that in parallel time series, the temporal correlation mainly exists between signals inside a time-specific ``window'' (i.e., $\vert j-i \vert \leq W$, where $W$ is the window size)~\cite{sakoe1978dynamic}.
Additionally, the cost function should be negatively correlated to the similarity (distance), as one of the problem settings in the original OT problem.
To realize these basic motivations, we borrowed the concept of barrier function~\cite{nesterov2018lectures} and define the cost function for our optimal transport problem as:
\begin{equation}
    M_{ij} = 
    \begin{cases}
        1 - \cos{(z_i^1, z_j^2)}, & \vert i - j \vert \leq K \\
        \infty, & \vert i - j \vert > K
    \end{cases}
    \label{eq:ot_cost_func}
\end{equation}
where $z_i^m$ is the representation of modality $m$ at timestamp $i$ and $cos(\cdot,\cdot)$ is the cosine value of two vectors.
We will show that such a type of transportation cost function ensures a context-window style alignment solution and also provide a proof in~\cref{sec:sol_math}.
To solve Eq. \eqref{eq:goal}, a common practice is to add an entropic regularization term:
\begin{equation}
    \min_A \sum_{i,j} A_{ij}M_{ij}-\mu A_{ij}\log A_{ij}
\label{eq:entropy_form}
\end{equation}
The unique solution $A^*$ can be calculated through Sinkhorn's algorithm~\cite{peyre2019computational}:
\begin{equation}
    A^* = \mathrm{diag}(\mathbf{u})K\mathrm{diag}(\mathbf{v}),
    K = \exp{(M/\mu)}
\end{equation}
The vector $\mathbf{u}$ and $\mathbf{v}$ are obtained through the following iteration until convergence:
\begin{align}
    \mathbf{v}^{t=0} &= \mathbf{1}_m 
    \label{eq:skh_step1} \\
    \mathbf{u}^{t+1} &=\frac{\mathbf{1}_n}{K\mathbf{v}^t},\quad
    \mathbf{v}^{t+1} =\frac{\mathbf{1}_n}{K\mathbf{u}^{t+1}}
    \label{eq:skh_step2}
\end{align}
After quantifying the temporal correlation into alignment matrices, we enforce the learner to fit those matrices so that it can automatically approximate the matrices from the non-victim modality in the decoding mode.
Specifically, a prediction network composed of a gated recurrent unit~\cite{chung2014empirical} and a linear projection layer takes the shared representations of the complete modality as input and outputs the prediction value for entries:
\begin{align}
    \hat{T}=\mathrm{softmax}(\mathrm{Linear}(\mathrm{GRU}(Z^1;\psi_r); \psi_p))
\end{align}
where $\psi=\{\psi_r,\psi_t\}$ is the collection of parameters in the prediction network. $\hat{T} = \{\hat{t}_1, \hat{t}_2, ..., \hat{t}_l\} \in \mathbb{R}^{l\times(2W+1)}$ are the predictions for $A^*$ and $\hat{t}_i \in \mathbb{R}^{2W+1}$ is the prediction for the alignment matrix segment $A^*_{i,i-W:i+W}$, i.e., the alignment components which span within the radius of $W$ centered at current timestamp $i$.
We reckon the mean squared error (MSE) between ``truths" generated from the solver and predictions to calculate the fitting loss:
\begin{equation}
    \mathcal{L}_{fit} = \frac{1}{(2W+1)l}\sqrt{\sum_i\sum_{j=i-W}^{i+W} (A^*_{ij}-\hat{T}_{ij})^2}
    \label{eq:fit_loss}
\end{equation}
where the summation is over the entries within context windows and we define $A^*_{ij}=0$ if $j\leq 0$ or $j>l$ for better readability.

\paragraph{Decoding Mode} 
In this mode, the learner behaves like a decoder that strives to generate meaningful substitution to the missing modality sequences.
The learner first decodes an alignment matrix $\hat{A}$ via the fitting network whose parameters are frozen during this stage.
Afterward, the imputation of the missing modality at position $j$ can be obtained through the linear combination of alignment matrices and visible sequences:
\begin{equation}
    \hat{z}_j^2 = \sum_{i=j-W}^{j+W}\hat{A}_{ij}z_i^1
    \label{eq:imp}
\end{equation}
We concatenate all these vectors to construct the imputation for the missing modality $\hat{Z}^2$ in the shared space:
\begin{equation}
    \hat{Z^2} = [\hat{z}_0^2, \hat{z}_1^2, \hat{z}_2^2, ..., \hat{z}_l^2]
    \label{eq:imp_sq}
\end{equation}
where $\hat{z}_0^2$ is reassigned by the initial embedding of the \texttt{[CLS]} token. 
The imputation results together with the complete modality sequences are then fed into the fusion network (Eq.~\eqref{eq:1} \textasciitilde \eqref{eq:3}) to continue the subsequent procedure.
\begin{algorithm}[ht!]
\small
\SetAlgoLined
\KwIn{$\mathcal{D}^{train} = \{\mathcal{D}^{train}_c, \mathcal{D}^{train}_m \}$, learning rate $\eta_{fit}, \eta_{main}$, parameters of the backbone network $\theta=\{\theta_{enc},\theta_{fu},\theta_{out}\}$ and the alignment dynamics learner $\psi=\{\psi_d\}$, batch size $n_b$, $\lambda$
}
\tcp{Warm-up Stage}
\For{\upshape each warm-up epoch}{
    \For{\upshape each $\mathcal{B}=\{\cup_{i=1}^{n_b}(x^{m_1}_i,x^{m_2}_i, y_i)\} \subset \mathcal{D}^{train}_c$}{
        \upshape Compute $\mathcal{L}_{main}, \mathcal{L}_{con}$ by Eq.~\eqref{eq:1}\textasciitilde\eqref{eq:5},~\eqref{eq:15}, \eqref{eq:16} \\
        $\theta \leftarrow \theta - \eta_{main}\nabla_{\theta}(\mathcal{L}_{main} + \lambda\mathcal{L}_{cons})$
    }
}
\For{\upshape each training epoch}{
\tcp{Train on the complete split} 
\For{\upshape each $\mathcal{B}=\{\cup_{i=1}^{n_b}(x^{m_1}_i,x^{m_2}_i, y_i)\} \subset \mathcal{D}^{train}_c$}{
    \upshape Compute $A^*$ by Sinkhorn algorithm according to Eq.~\eqref{eq:ot_cost_func}\textasciitilde\eqref{eq:skh_step2} \\
    \upshape Compute $\mathcal{L}_{fit}$ according to~\eqref{eq:fit_loss}; \\
    \tcp{Tune the dynamics learner}
    $\psi \leftarrow \psi - \eta_{fit}\nabla_{\psi}\mathcal{L}_{fit}$ \\
    \upshape Compute $\mathcal{L}_{main}, \mathcal{L}_{con}$ according to Eq.~\eqref{eq:1}\textasciitilde\eqref{eq:5},~\eqref{eq:15}, \eqref{eq:16} \\
    \tcp{Tune the backbone network}
    $\theta \leftarrow \theta - \eta_{main}\nabla_{\theta}(\mathcal{L}_{main} + \lambda\mathcal{L}_{cons})$
 }
\tcp{Train on the missing split}
\For{\upshape each $\mathcal{B}=\{\cup_{i=1}^{n_b}(x^{m_1}_i, y_i)\} \subset \mathcal{D}^{train}_m$}{
    \upshape Impute the representation sequences of the missing modality $\hat{Z}_i^2$ by Eq.~\eqref{eq:imp}~\eqref{eq:imp_sq} and then $\mathcal{L}_{main}$ by Eq.~\eqref{eq:1}\textasciitilde\eqref{eq:5},~\eqref{eq:15}, \eqref{eq:16}
     $\theta \leftarrow \theta - \eta_{main}\nabla_{\theta}\mathcal{L}_{main}$
 }
}
\caption{Denoising Training}
\label{alg:1}
\end{algorithm}

\subsection{Denoising Training}
Inspired by previous work in data imputation~\cite{kyono2021miracle}, we design a denoising training algorithm to promote prediction accuracy and imputation quality concurrently, as shown in Alg.~\ref{alg:1}.
In the beginning, we warm up the model on the complete split of the training set.
We utilize two transformer encoders to project input modality sequences $x^{m_1}$ and $x^{m_2}$ into a shared feature space, denoted as $Z^1$ and $Z^2$. 
Following~\citet{han2021improving}, we apply a contrastive loss~\cite{chen2020simple}~as the regularization term to force a similar distribution of the generated vectors $Z^1$ and $Z^2$:
\begin{equation}
    \mathcal{L}_{con} = -\frac{1}{N_b}\sum_i \log\frac{\phi(Z^1_i,Z_i^2)}{\sum_{j} \phi(Z_i^{1},Z_j^{2})}
    \label{eq:15}
\end{equation}
where the summation is over the whole batch of size $N_b$ and $\phi$ is a score function with an annealing temperature $\tau$ as the hyperparameter:
\begin{equation}
    \phi(s,t) = \exp{(s^Tt/\tau)}
    \label{eq:16}
\end{equation}
Next, the denoising training loop proceeds to couple the ADL and backbone network. 
In a single loop, we first train the alignment dynamics learner (line 9\textasciitilde11), then we train the backbone network on the complete split (line 12\textasciitilde13) and missing split (line 15\textasciitilde17).
Since the learner training process uses the modality-complete split, and we found in experiments (\cref{sec:abl}) that model's performance stays nearly constant if the tuning for the learner and the main network occurs concurrently on every batch,
we merge them into a single loop (line 8\textasciitilde14) to reduce the redundant batch iteration. 

\section{Experiments}
\subsection{Datasets}
We utilize CMU-MOSI~\cite{zadeh2016multimodal} and CMU-MOSEI~\cite{zadeh2018multimodal} for sentiment prediction, and MELD~\cite{poria2019meld} for emotion recognition, to create our evaluation benchmarks.
The statistics of these datasets and preprocessing steps can be found in~\cref{sec:ds_stat}.
All these datasets consist of three parallel modality sequences---text (t), visual (v) and acoustic (a). 
In a single run, we extract a pair of modalities and select one of them as the victim modality which we then randomly remove $p'= 1-p$ of all its sequences.
Here $p$ is the surviving rate for the convenience of description.
We preprocess test sets as Fig.~\ref{ours}~(remove all victim modality samples) in setting A and Fig.~\ref{smil}~(randomly remove $p'$ of victim modality samples) in setting B.
Setting B inherits from~\citet{ma2021smil}~while the newly added setting A is considered as a complementary test case of more severe missing situations, which can compare the efficacy of pure imputation methods and enrich the connotation of robust inference.   
We run experiments with two randomly picking $p\in\{10\%,50\%\}$--- 
dissimilar to~\citet{ma2021smil}, we enlarge the gap between two $p$ values to strengthen the distinction between these settings. 

\subsection{Baselines and Evaluation Metrics}
We compare our models with the following relevant and strong baselines:
\begin{itemize}[leftmargin=*]
\item \textbf{Supervised-Single} trains and tests the backbone network on a single complete modality, which can be regarded as the~\textit{lower bound} (LB) for all the baselines.
\item \textbf{Supervised-Double} trains and tests the backbone network on a pair of complete modalities, which can be regarded as the~\textit{upper bound} (UB).
\item \textbf{MFM}~\cite{tsai2018learning} learns modality-specific generative factors that can be produced from other modalities at training time and imputes the missing modality based on these factors at test time.
\item \textbf{SMIL}~\cite{ma2021smil} imputes the sequential representation of the missing modality by linearly adding clustered center vectors with weights from learned Gaussian distribution.
\item \textbf{Modal-Trans}~\cite{wang2020transmodality,tang-etal-2021-ctfn} builds a cyclic sequence-to-sequence model and learns bidirectional reconstruction.
\end{itemize}

The characteristics of all these models are listed for comparison in Table~\ref{tab:model_comp}. 
Previous work relies on either a Gaussian generative or sequence-to-sequence formulation to reconstruct the victim modality or its sequential representations, while our model adopts none of these architectures. 
We run our models under 5 different splits and report the average performance. 
The training details can be found in~\cref{sec:hp_search}.

We compare these models on the following metrics: for the sentiment prediction task, we employ the mean absolute error (MAE) which quantifies how far the prediction value deviates from the ground truth, and the binary classification accuracy (Acc-2) that counts the proportion of samples correctly classified into positive/negative categories; for emotion recognition task we compare the average F1 score over seven emotional classes.
\begin{table}[ht!]
    \centering
    \small
    \begin{tabular}{c|ccc}
    \toprule
         \multirow{2}{*}{Model} & \multirow{2}{*}{\shortstack{Generative \\ Gaussian}} & \multirow{2}{*}{Recon} & \multirow{2}{*}{Seq2Seq} \\
         ~ & & & \\
         \hline
         MFM  & \xmark & \cmark & \cmark \\
         SMIL & \cmark & \cmark & \xmark \\
         Modal-Trans & \xmark & \cmark & \cmark \\
         \modelname~(Ours) & \xmark & \xmark  & \xmark  \\
    \bottomrule
    \end{tabular}
    \caption{Model characteristics.}
    \label{tab:model_comp}
\end{table}

\renewcommand{\arraystretch}{1.005}
\begin{table*}[ht]
    \centering
    \small
    \resizebox{\linewidth}{!}{
    \begin{tabular}{l*{3}{|c c | c c}}
        \toprule
        \multirow{3}{*}{Method} & 
        \multicolumn{4}{c|}{T $\rightarrow$ V} & \multicolumn{4}{c|}{V$\rightarrow$A}  & \multicolumn{4}{c}{A$\rightarrow$T}   \\
        \cline{2-13}
           ~ & \multicolumn{2}{c|}{Setting A} & \multicolumn{2}{c|}{Setting B} & \multicolumn{2}{c|}{Setting A} & \multicolumn{2}{c|}{Setting B} & \multicolumn{2}{c|}{Setting A} & \multicolumn{2}{c}{Setting B} \\
         \cline{2-13}
         ~ & \textbf{MAE$\downarrow$} & \textbf{Acc-2$\uparrow$} & \textbf{MAE$\downarrow$} & \textbf{Acc-2$\uparrow$} & \textbf{MAE$\downarrow$} & \textbf{Acc-2$\uparrow$} & \textbf{MAE$\downarrow$} & \textbf{Acc-2$\uparrow$} & \textbf{MAE$\downarrow$} & \textbf{Acc-2$\uparrow$} & \textbf{MAE$\downarrow$} & \textbf{Acc-2$\uparrow$} \\
        \midrule
        LB & 1.242 & 68.6 & 1.242 & 68.6 & 1.442 & 46.4 & 1.442 & 46.4 & 1.440 & 42.2 & 1.440 & 42.2 \\
        UB & 1.019 & 77.7 & 1.019 & 77.7 & 1.413 & 57.8 & 1.413 & 57.8 & 1.081 & 75.8 & 1.081 & 75.8 \\
        \cline{1-13}
        MFM & 1.103 & 71.0 & 1.093 & 73.2 & 1.456 & 43.5 & 1.452 & 43.9  & 1.477 & 42.2 & 1.454 & 42.2  \\
        SMIL & 1.073 & 74.2 & 1.052 & 75.3 & 1.442 & 45.9 & 1.438 & 46.5 & 1.447 & 43.3 & 1.439 & 45.4  \\
        Modal-Trans & 1.052 & 75.5 & 1.041 & 75.8 & 1.428 & 49.4 & 1.425 & 49.7 & 1.435 & 48.7 & 1.432 & 48.9  \\
        \cline{1-13}
        MM-Align & \textbf{1.028}$^\natural$ & \textbf{76.9}$^\natural$ & \textbf{1.027} & \textbf{77.0} & \textbf{1.416}$^\natural$ & \textbf{52.0}$^\natural$ & \textbf{1.411}$^\natural$ & \textbf{53.1}$^\natural$  & \textbf{1.426} & \textbf{51.5}$^\natural$ & \textbf{1.414}$^\natural$  & \textbf{52.0}$^\natural$  \\
        \midrule
        \multirow{3}{*}{} & 
        \multicolumn{4}{c|}{V $\rightarrow$ T} & \multicolumn{4}{c|}{A$\rightarrow$V}  & \multicolumn{4}{c}{T$\rightarrow$A}   \\
        \cline{2-13}
           ~ & \multicolumn{2}{c|}{Setting A} & \multicolumn{2}{c|}{Setting B} & \multicolumn{2}{c|}{Setting A} & \multicolumn{2}{c|}{Setting B} & \multicolumn{2}{c|}{Setting A} & \multicolumn{2}{c}{Setting B} \\
         \cline{2-13}
         ~ & \textbf{MAE$\downarrow$} & \textbf{Acc-2$\uparrow$} & \textbf{MAE$\downarrow$} & \textbf{Acc-2$\uparrow$} & \textbf{MAE$\downarrow$} & \textbf{Acc-2$\uparrow$} & \textbf{MAE$\downarrow$} & \textbf{Acc-2$\uparrow$} & \textbf{MAE$\downarrow$} & \textbf{Acc-2$\uparrow$} & \textbf{MAE$\downarrow$} & \textbf{Acc-2$\uparrow$} \\
        \midrule
        LB & 1.442 & 46.3 & 1.442 & 46.3 & 1.440 & 42.2 & 1.440 & 42.2 & 1.242 & 68.6 & 1.242 & 68.6 \\
        UB & 1.019 & 77.7 & 1.019 & 77.7 & 1.413 & 57.8 & 1.413 & 57.8 & 1.081 & 75.8  & 1.081 & 75.8 \\
        \cline{1-13}
        MFM & 1.479 & 42.2 & 1.429 & 51.9 & 1.454 & 42.2 & 1.455 & 42.2 & 1.078 & 72.9 & 1.082 & 73.7  \\
        SMIL & 1.448 & 44.2 & 1.447 & 43.3 & 1.442 & 45.9 &  1.438 & 47.3 & 1.060 & 75.5 & 1.089 & 74.9 \\
        Modal-Trans & 1.429 & 50.3 & 1.420 & 53.1  & 1.439 & 47.4 & 1.442  & 48.3 & 1.052 & 75.2 & 1.073 & 74.3  \\
        \cline{1-13}
        MM-Align & \textbf{1.415}$^\natural$ & \textbf{52.7}$^\natural$ & \textbf{1.410} & \textbf{53.4} & \textbf{1.427}$^\natural$ & \textbf{49.9}$^\natural$ &  \textbf{1.426}$^\natural$  & \textbf{50.7}$^\natural$ & \textbf{1.028}$^\natural$ & \textbf{76.7}$^\natural$ & \textbf{1.032}$^\natural$ & \textbf{76.6}$^\natural$ \\
        \bottomrule
    \end{tabular}
    }
    \caption{Results on the CMU-MOSI dataset~($p=10$). The reported results are the average of five runs using the same set of hyperparameters and different random seeds. ``A $\to$ B'' means the imputation from the complete modality A to the missing modality B at the test time. $\natural$: results of our model are significantly better than the highest baselines with p-value $<$ 0.05 based on the paired t-test.}
    \label{table:mosi}
\end{table*}

\renewcommand{\arraystretch}{1.005}
\begin{table*}[ht]
    \centering
    \small
    \resizebox{\linewidth}{!}{
    \begin{tabular}{l*{3}{|c c | c c}}
        \toprule
        \multirow{3}{*}{Method} & 
        \multicolumn{4}{c|}{T $\rightarrow$ V} & \multicolumn{4}{c|}{V$\rightarrow$A}  & \multicolumn{4}{c}{A$\rightarrow$T}   \\
        \cline{2-13}
           ~ & \multicolumn{2}{c|}{Setting A} & \multicolumn{2}{c|}{Setting B} & \multicolumn{2}{c|}{Setting A} & \multicolumn{2}{c|}{Setting B} & \multicolumn{2}{c|}{Setting A} & \multicolumn{2}{c}{Setting B} \\
         \cline{2-13}
         ~ & \textbf{MAE$\downarrow$} & \textbf{Acc-2$\uparrow$} & \textbf{MAE$\downarrow$} & \textbf{Acc-2$\uparrow$} & \textbf{MAE$\downarrow$} & \textbf{Acc-2$\uparrow$} & \textbf{MAE$\downarrow$} & \textbf{Acc-2$\uparrow$} & \textbf{MAE$\downarrow$} & \textbf{Acc-2$\uparrow$} & \textbf{MAE$\downarrow$} & \textbf{Acc-2$\uparrow$} \\
        \midrule
        LB & 0.687  & 77.4 & 0.687 & 77.4 & 0.836 & 61.3 & 0.836 &  61.3 & 0.851 & 62.9 & 0.851 & 62.9 \\
        UB & 0.615 & 81.3 & 0.615 & 81.3 & 0.707 & 79.5 & 0.707 & 79.5 & 0.613 & 80.9 & 0.613 & 80.9 \\
        \cline{1-13}
        MFM & 0.658 & 79.2 & 0.645 & 80.0 & 0.827 & 61.5 & 0.818 & 61.9 & 0.836 & 64.3 & 0.830 & 63.6 \\
        SMIL & 0.680 & 78.3 & 0.648 & 78.5 & 0.819 & 64.3 & 0.816 & 63.6 & 0.840 & 62.9 & 0.839 & 63.0 \\
        Modal-Trans & 0.645 & 79.6 & 0.647 & 79.6 & 0.818 & 64.7 & 0.815 & 65.4 & 0.827 & 64.9 & 0.823 & 65.6 \\
        \cline{1-13} 
        MM-Align & \textbf{0.637}$^\natural$ & \textbf{80.8}$^\natural$ & \textbf{0.638}$^\natural$ & \textbf{81.1}$^\natural$  & \textbf{0.811}$^{\natural}$ & \textbf{65.9}$^\natural$ & \textbf{0.813} & \textbf{66.2}$^\natural$  & \textbf{0.824} & \textbf{65.3} & \textbf{0.817} & \textbf{66.3} \\
        \midrule
        \multirow{3}{*}{} &  
        \multicolumn{4}{c|}{V $\rightarrow$ T} & \multicolumn{4}{c|}{A$\rightarrow$V}  & \multicolumn{4}{c}{T$\rightarrow$A}   \\
        \cline{2-13}
           ~ & \multicolumn{2}{c|}{Setting A} & \multicolumn{2}{c|}{Setting B} & \multicolumn{2}{c|}{Setting A} & \multicolumn{2}{c|}{Setting B} & \multicolumn{2}{c|}{Setting A} & \multicolumn{2}{c}{Setting B} \\
         \cline{2-13}
         ~ & \textbf{MAE$\downarrow$} & \textbf{Acc-2$\uparrow$} & \textbf{MAE$\downarrow$} & \textbf{Acc-2$\uparrow$} & \textbf{MAE$\downarrow$} & \textbf{Acc-2$\uparrow$} & \textbf{MAE$\downarrow$} & \textbf{Acc-2$\uparrow$} & \textbf{MAE$\downarrow$} & \textbf{Acc-2$\uparrow$} & \textbf{MAE$\downarrow$} & \textbf{Acc-2$\uparrow$} \\
        \midrule
        LB & 0.836 & 61.3 & 0.836 & 61.3 & 0.851 & 62.9 & 0.851 & 62.9 & 0.687 & 77.4 & 0.687 & 77.4 \\
        UB & 0.615 & 81.3 & 0.615 & 81.3 & 0.707 & 79.5 & 0.707 & 79.5 & 0.613 & 80.9 & 0.613 & 80.9 \\
        \cline{1-13}
        MFM & 0.821 & 62.0 & 0.817 & 61.7 & 0.842 & 62.7 & 0.828 & 63.9 & 0.658 & 79.1 & 0.645 & 79.7 \\
        SMIL & 0.820 & 63.1 & 0.816 & 63.5 & 0.838 & 63.2 & 0.842 & 62.4 & 0.684 & 78.5 & 0.684 & 77.4  \\
        Modal-Trans & 0.817 & 65.1 & 0.814 & 65.7 & 0.832 & 64.6 & 0.823 & 65.1 & 0.643 & 79.9 & 0.645 & 79.4 \\
        \cline{1-13}
        MM-Align & \textbf{0.811}$^\natural$ & \textbf{66.2}$^\natural$  & \textbf{0.806}$^\natural$  & \textbf{66.9}$^\natural$  & \textbf{0.822}$^\natural$  & \textbf{65.4}$^\natural$  & \textbf{0.818} & \textbf{65.7} & \textbf{0.635}$^\natural$  & \textbf{81.0}$^\natural$  & \textbf{0.637}$^\natural$  & \textbf{80.9}$^\natural$  \\
        \bottomrule
        \end{tabular}
    }
    \caption{Results on the CMU-MOSEI dataset ($p=10)$. Notations share the same meaning as the last table.}
    \label{table:mosei}
\end{table*}

\renewcommand{\arraystretch}{1.02}
\begin{table}[ht]
    \centering
    \small
    \resizebox{\linewidth}{!}{
    \begin{tabular}{l*{3}{|c|c}}
        \toprule
        \multirow{2}{*}{Method} &
         A & B &  A &  B & A & B \\
        \cline{2-7}
        ~ & \multicolumn{2}{c|}{T $\rightarrow$ V} & \multicolumn{2}{c|}{V$\rightarrow$A}  & \multicolumn{2}{c}{A$\rightarrow$T}   \\
        \midrule
        LB & 54.0 & 54.0 & 31.3 & 31.3 & 31.3 & 31.3  \\
        UB  & 55.8 & 55.8 & 32.1 & 32.1 & 55.9 & 55.9 \\
        \cline{1-7}
        MFM    & 54.0 & 53.9 & 31.3 & 31.3 & 31.3 & 43.1 \\
        SMIL   & 54.4 & 54.2 & 31.3 & 31.3 & 31.3 & 43.5\\
        Modal-Trans & 55.0 & 54.8 & 31.3 & 31.4 & 31.5 & 44.4 \\
        \cline{1-7}
        MM-Align & \textbf{55.7} & \textbf{55.7} & \textbf{31.9} & \textbf{31.9}$^{\natural}$ & 31.5 & \textbf{45.5}  \\
        \cline{1-7}
        ~ & \multicolumn{2}{c|}{V $\rightarrow$ T} & \multicolumn{2}{c|}{A $\rightarrow$V}  & \multicolumn{2}{c}{T $\rightarrow$A}   \\
        \midrule
        LB & 31.3 & 31.3 & 31.3 & 31.3 & 54.0 & 54.0 \\
        UB  & 55.8 & 55.8 & 32.1 & 32.1 & 55.9 & 55.9  \\
        \cline{1-7}
        MFM & 31.4 & 43.6 & 31.3 & 31.3 & 54.2 & 54.1   \\
        SMIL & 31.4 & 43.9 & 31.3 & 31.3 & 54.5 & 54.2   \\
        Modal-Trans & 31.6 & 44.2 & 31.3 & 31.3 & 55.0 & 54.8  \\
        \cline{1-7}
        MM-Align & \textbf{32.3} & \textbf{45.4} & 31.3 & \textbf{32.0} & \textbf{55.6} & \textbf{55.7}  \\
        \bottomrule
    \end{tabular}
    }
    \caption{Results on MELD~($p=50\%$). Notations share the same meaning as the last table.}
    \label{table:meld}
\end{table}

\subsection{Results}
Due to the particularities of three datasets, We report the results of the smallest $p$ values when most of these baselines yield $1\%$ higher results than the lower bound in Table~\ref{table:mosi},~\ref{table:mosei}~and~\ref{table:meld}.
From them we mainly have the following observations:

First, Compared with lower bounds, in setting A where models are tested with only the non-victim modality, our method gains 6.6\%\textasciitilde9.3\%, 2.4\%\textasciitilde4.9\% accuracy increment on the CMU-MOSI and CMU-MOSEI dataset and 0.6\%\textasciitilde1.7\% F1 increment on the MELD dataset (except A$\to$V and A$\to$T).
Besides,~\modelname~significantly outperforms all the baselines in most settings.
These facts indicate that leveraging the local alignment information as indirect clues facilitates to performing robust inference on missing modalities.

Second, model performance varies greatly especially when the non-victim modality alters.
It has been pointed out that three modalities do not play an equal role in multimodal tasks~\cite{tsai2019multimodal}. 
Among them, the text is usually the predominant modality that contributes majorly to accuracy, while visual and acoustic have weaker effects on the model's performance.
From the results, it is apparent that if the source modality is predominant, the model's performance gets closer to or even surpasses the upper bound, which reveals that the predominant modality can also offer richer clues to facilitate the dynamics learning process than other modalities.

Third, when moving from setting A to setting B by adding parallel sequences of the non-victim modality in the test set, results incline to be constant in most settings. 
Intuitively, performance should become better if more parallel data are provided.
However, as most of these models are unified and must learn to couple the restoration/imputation module and backbone network, the classifier inevitably falls into the dilemma that it should adapt more to the true parallel sequences or the mixed sequences since both are included patterns in a training epoch.
Hence sometimes setting B would not perform evidently better than setting A.
Particularly, we find that when Modal-Trans encounters overfitting,~\modelname~can alleviate this trend, such as T$\to$A in all three datasets.

Additionally,~\modelname~acquires a 3\textasciitilde4$\times$ speed-up in training. We record the time consumption and provide a detailed analysis in~\cref{sec:comp_analysis}~and~\ref{sec:inf_speed}.

\begin{table}[ht]
    \centering
    \small
    \resizebox{\linewidth}{!}{
    \begin{tabular}{l*{3}{|c c}}
        \toprule
        \multirow{2}{*}{Settings} & 
        \multicolumn{2}{c|}{T $\rightarrow$ V} & \multicolumn{2}{c|}{V$\rightarrow$A} & \multicolumn{2}{c}{A$\rightarrow$T} \\
         ~ & \textbf{MAE$\downarrow$} & \textbf{Acc-2$\uparrow$} & \textbf{MAE$\downarrow$} & \textbf{Acc-2$\uparrow$} & \textbf{MAE$\downarrow$} & \textbf{Acc-2$\uparrow$} \\
        \cline{1-7}
        \modelname & \textbf{1.028} & \textbf{76.9} & \textbf{1.416} & \textbf{52.0} & 1.426 & 51.5  \\
        w/o $\mathcal{L}_{con}$ & 1.037 & 76.7 & 1.422 & 51.8 & 1.432 & 49.5 \\
        w/o $\mathcal{L}_{fit}$ & 1.085 & 72.2 & 1.437 & 47.3 & 1.448 & 44.6 \\
        w/o SI & 1.033 & 76.6 & 1.425 & 51.9 & \textbf{1.419} & \textbf{51.8} \\
        \bottomrule
    \end{tabular}
    }
    \caption{Results of ablation experiments on CMU-MOSI dataset.}
    \label{tab:abl_study}
\end{table}

\begin{figure*}[ht]
    \centering
    \includegraphics[width=\textwidth, trim=0 5cm 0 6cm]{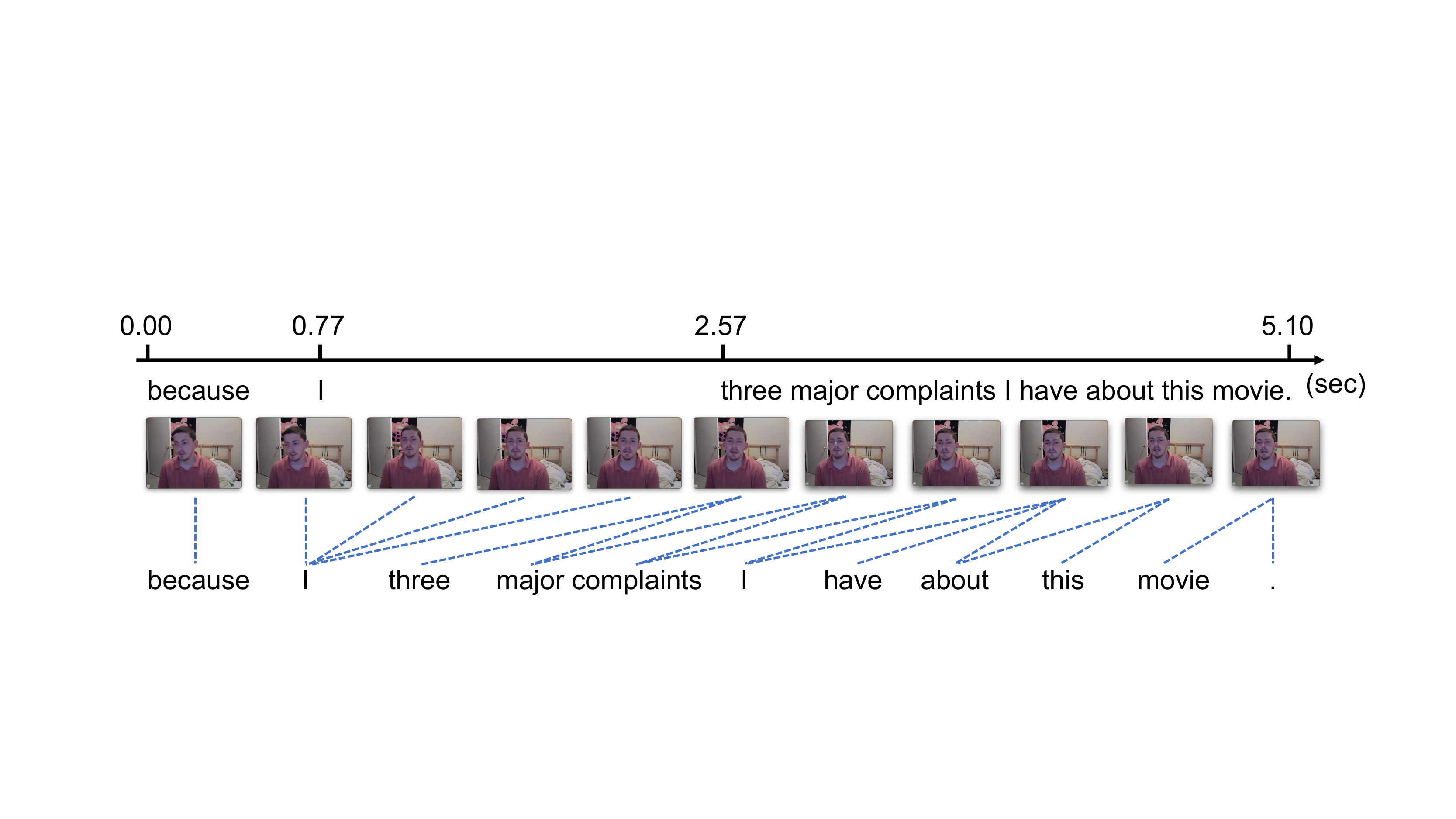}
    \caption{An example from CMU-MOSI dataset. The text below the time axis is aligned to the starting time of its pronunciation. The pictures are the central frame of each cluster that lasts the same time interval. The dashed lines connect each word with the frames of its appearance in the video.}
    \label{fig:case_study}
\end{figure*}

\subsection{Ablation Study}
\label{sec:abl}
We run our model under the following ablative settings on three randomly chosen modality pairs from the CMU-MOSI dataset in setting A: 1) removing the contrastive loss which serves as the invariant space regularizer; 
2) removing the fitting loss so that the ADL only generates a random alignment matrix when running in the inference mode; 
3) separating the single iteration (SI) over the complete split that concurrently optimizes the fitter and backbone network in Alg.~\ref{alg:1} into two independent loops.
The results of these experiments are displayed in Table~\ref{tab:abl_study}.
We witness a performance drop after removing the contrastive loss, and the drop is higher if we disable the ADL, which implies the benefits from the alignment dynamics-based generalization process on the modality-invariant hidden space.
Finally, merging two optimization steps will not cause performance degradation. 
Therefore it is more time-efficient to design the denoising loop as Alg.~\ref{alg:1} to prevent an extra dataset iteration.

\section{Analysis}
\paragraph{Impact of the Window Size} 
To further explore the impact of window size, we run our models by increasing window size from 4 to 256 which exceeds the lengths of all sentences so that all timestamps are enclosed by the window.
The variation of MAE and F1 in this process is depicted in Fig.~\ref{fig:perf_by_window}. 
There is a dropping trend (MAE increment or F1 decrement) towards both sides of the optimal size.
We argue that it is because when the window expands, it is more probable for the newly included frame to add noise rather than provide valuable alignment information.
In the beginning, the marginal benefit is huge so the performance almost keeps climbing.
The optimal size is reached when the marginal benefit decreases to zero.

\begin{figure}[h]
\centering
\includegraphics[scale=0.44, trim=0cm 0cm 0cm 0cm]{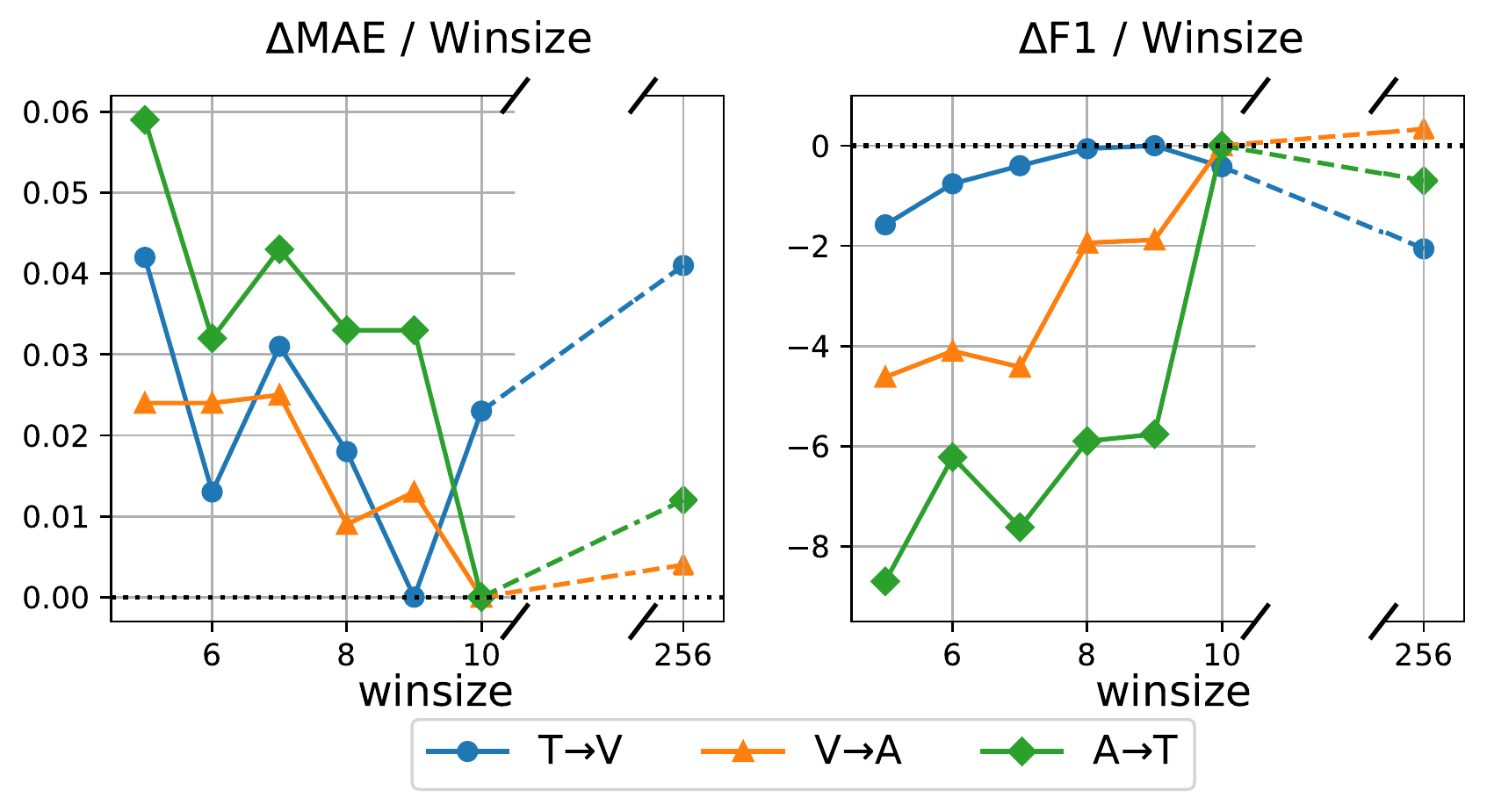}
\caption{Performance variation under different window sizes. The optimal sizes for the three pairs are 9, 10, 10.}
\label{fig:perf_by_window}
\end{figure}

To explain this claim, we randomly select a raw example from the CMU-MOSI dataset. 
As shown in Fig.~\ref{fig:case_study}, the textual expression does not advance in a uniform speed.
From the second to the third word 1.80 seconds elapses, while the last eight words are covered in only 2.53 seconds.
Intuitively we can assume all the frames in the video that span across the pronunciation of a word are \textit{causally correlated} with that word so that the representation mappings from the word to these frames are necessary and can benefit the downstream tasks.
For example, for the word ``I'' present at $t=1$ in text, it can benefit the timestamps until at least $t=5$ in the visual modality.
Note that we may overlook some potential advantages that could not be easily justified in this way and possess different effect scope, but we deem that those advantages would like-wisely disappear as the window size keeps growing.

\section{Conclusion}

In this paper, we propose~\modelname, a fast and efficient framework for the problem of missing modality inference.
It applies the theory of optimal transport to learn the alignment dynamics between temporal modality sequences for the inference in the case of missing modality sequences. 
Experiments on three datasets of demonstrate that~\modelname~can achieve much better performance and thus reveal the higher robustness of our method.
We hope that our work can inspire other research works in this field. 

\section*{Limitations}

Although our model has successfully tackled the two missing patterns, it may still fail in more complicated cases. 
For example, if missing happens randomly in terms of frames (some timestamps within a unimodal clip) instead of instances (the entire unimodal clip), then our proposed approach could not be directly used to deal with the problem, since we need at least several instances of complete parallel data to learn how to map from one modality sequences to the other. 
However, we believe these types of problems can still be properly solved by adding some mathematical tools like interpolation, etc. 
We will consider this idea as the direction of our future work.

Besides, the generalization capability of our framework on other multimodal tasks is not clear. 
But at least we know the feasibility highly depends on the types of target tasks, especially the input formats---they have to be parallel sequences so that temporal alignment information between these sequences can be utilized. The missing patterns should be similar to what we described in section 2, as we discussed in the first paragraph.
\section*{Acknowledgements}

This research is supported by the SRG grant id: T1SRIS19149 and the Ministry of Education, Singapore, under its AcRF Tier-2 grant (Project no. T2MOE2008, and Grantor reference no. MOET2EP20220-0017). Any opinions, findings, conclusions, or recommendations expressed in this material are those of the author(s) and do not reflect the views of the Ministry of Education, Singapore.

\bibliography{anthology,custom}
\bibliographystyle{acl_natbib}

\appendix
\section{Dataset Statistics and Preprocessing}
\label{sec:ds_stat}
The statistics of the two datasets are listed in Table~\ref{tab:data_statistics}. 
MELD is originally a dialogue emotion detection dataset, where each dialogue contains many sentences. 
Since we want to make it compatible with tested models, we extract all sentences and remove those that lack at least one modality (text, visual, acoustic).
Following previous work, for MOSI and MOSEI we use COVAREP~\cite{degottex2014covarep} and P2FA~\cite{yuan2008speaker} to respectively extract visual and acoustic features.
For MELD, we use ResNet-101~\cite{he2016deep} and Wave2Vec 2.0~\cite{baevski2020wav2vec} to extract visual and acoustic features.
\begin{table}[ht]
    \centering
    \begin{tabular}{c|c c c c}
    \toprule
     Dataset    &  Train & Dev & Test & Total \\
    \hline
    CMU-MOSI & 1284  & 229  & 686 & 2299 \\
    CMU-MOSEI & 16326 & 1871 & 4859 & 22856 \\
    MELD & 9988 & 1108 & 2610 & 13706 \\
    \bottomrule
    \end{tabular}
    \caption{Statistics of three datasets we use for experiments.}
    \label{tab:data_statistics}
\end{table}

\section{Hyperparameter Search}
\label{sec:hp_search}
All these models are trained on a single RTX A6000 GPU. We use Glove~\cite{pennington2014glove} 300d to initialize the embedding of all the tokens.
We perform a grid search for part of the hyperparameters as~Table~\ref{tab:hp_search}.
\begin{table}[h!]
    \resizebox{\linewidth}{!}{
    \begin{tabular}{c|c c c}
        \toprule
         HP-name  &  CMU-MOSI & CMU-MOSEI & MELD \\
         \hline
         $\eta_{main}$ & 1e-3,2e-3 & 1e-4 & 1e-3,1e-4\\
         $\eta_{fit}$ & 1e-4,5e-4,1e-3 & 2e-5,1e-4 & 5e-4,5e-5 \\
         attn\_dim & 32,40 & 32,40 & 32,64 \\
         num\_head & 4,8 & 4,8 & 4,8 \\
         $n_b$ & 32 & 32 & 32 \\
         warm-up & 1,2 & 1,2 & 1 \\
         patience & 10 & 5 & 5 \\
         $\lambda$ & 0.05,0.1 & 0.05,0.1 & 0.05,0.1 \\
         $K$ & 4,5,8,9,10 & 4,5,6,7,8 & 3,4,5,8 \\
        \bottomrule
    \end{tabular}
}
    \caption{The hyperparameter search for three datasets}
    \label{tab:hp_search}
\end{table}

\section{OT Solution}
\label{sec:sol_math}
\subsection{Visualization of Solutions}
\label{sec:align_vis}
\begin{figure}[ht]
    \centering
    \includegraphics[scale=0.42]{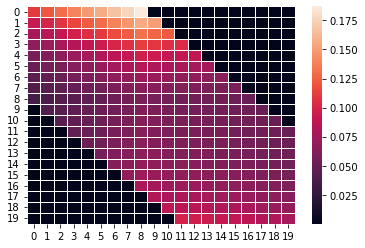}
    \caption{The average absolute entry values of the produced alignment matrices (window size=8).}
    \label{fig:avg_mat}
\end{figure}
To verify our statement in Section~\ref{sec:ADL} that the learned dynamics matrices are in the window style, we calculate and visualize the mean absolute values for each entry.
Due to various sentence lengths, the values are averaged over all matrices whose corresponding input sequences' lengths are no smaller than 20.
We visualize the heat map of the average entry values in Fig.~\ref{fig:avg_mat}.
It can be clearly viewed that the values outside the window stay nearly 0 (black squares), implying that they are always close to 0.

\subsection{Proof of solution pattern}
\label{sec:pattern_proof}
We formalize the window style solution in mathematical language.
\newtheorem{thm}{Theorem}
\begin{thm}
Given the optimal transport formulation as Eq.~\eqref{eq:goal}\textasciitilde~\eqref{eq:entropy_form}. All the entries $a^*_{ij}$ that satisfy $\vert i-j \vert > W$ in the optimal transport plan $A^*$ are 0, where $W$ is the window size. 
\end{thm}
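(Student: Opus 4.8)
The plan is to argue directly from the entropic-regularized objective in Eq.~\eqref{eq:entropy_form} rather than from the closed-form Sinkhorn solution, because the optimality argument is insensitive to scaling conventions and handles the infinite-cost entries transparently. First I would fix the standard extended-real-line conventions used in optimal transport, namely $0\cdot\infty = 0$ and $x\log x \to 0$ as $x\to 0^+$, so that an entry carrying no mass contributes nothing to either the transport term $A_{ij}M_{ij}$ or the entropy term $-\mu A_{ij}\log A_{ij}$, even when $M_{ij}=\infty$. Under these conventions the objective is well defined on the feasible polytope of transport plans with the prescribed (here uniform) marginals, and it is strictly convex in $A$ thanks to the entropy term, so the minimizer $A^*$ is unique.

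Next I would isolate the contribution of a single entry with $|i-j|>W$. By the cost definition in Eq.~\eqref{eq:ot_cost_func}, any such entry has $M_{ij}=\infty$. If $A^*_{ij}>0$, then $A^*_{ij}M_{ij}=+\infty$ while the entropy contribution $-\mu A^*_{ij}\log A^*_{ij}$ is finite, so the total objective value is $+\infty$. Choosing $A_{ij}=0$ instead contributes exactly $0$ to both terms. Hence any plan placing positive mass outside the band is strictly dominated, and the unique minimizer must satisfy $A^*_{ij}=0$ for every $|i-j|>W$, which is exactly the claimed window-style pattern.

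The one point that needs care is feasibility: the argument above only forbids mass outside the band, so I must confirm that the constraint set still contains a plan supported entirely within the band, i.e.\ that the forbidden region can be avoided while matching the marginals. Since the marginals are uniform and the band includes the main diagonal $i=j$, I would exhibit such a plan explicitly (for equal-length sequences the diagonal coupling already works, and otherwise any feasible banded plan suffices), guaranteeing that the infimum is attained with finite value and is consistent with the zero pattern just derived. As corroboration I would note that the same conclusion drops out of the Sinkhorn form $A^*=\mathrm{diag}(\mathbf{u})K\,\mathrm{diag}(\mathbf{v})$ with Gibbs kernel $K_{ij}=\exp(-M_{ij}/\mu)$: this kernel vanishes wherever $M_{ij}=\infty$, and since $\mathbf{u},\mathbf{v}$ have finite positive entries, $A^*_{ij}=u_i K_{ij} v_j=0$ outside the window.

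I expect the main (though mild) obstacle to be the bookkeeping around the $0\cdot\infty$ and $0\log 0$ conventions together with the feasibility check, rather than any hard estimate; once those conventions are pinned down, the infinite-cost \emph{barrier} in Eq.~\eqref{eq:ot_cost_func} forces the banded support essentially immediately.
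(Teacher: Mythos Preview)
Your argument is correct and shares the paper's core idea---any plan with mass outside the band has infinite objective, while a finite-cost feasible plan exists, so the optimum must be banded---but your execution is more careful in two respects and adds a genuinely different alternative. First, you explicitly track the entropic term via the $0\cdot\infty$ and $0\log 0$ conventions, whereas the paper's contradiction only bounds $\sum A^*_{ij}M_{ij}$ and silently drops $-\mu A_{ij}\log A_{ij}$; your treatment is what actually makes the comparison with Eq.~\eqref{eq:entropy_form} rigorous. Second, your feasibility witness (the diagonal coupling for uniform marginals) is cleaner than the paper's ``path $i'\to k_1\to\cdots\to k_n\to j'$'' construction, which reads more like a graph-reachability argument than an exhibition of an alternative transport plan. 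Finally, your Sinkhorn-form corroboration, $A^*_{ij}=u_iK_{ij}v_j$ with $K_{ij}=\exp(-M_{ij}/\mu)=0$ whenever $M_{ij}=\infty$, is a one-line proof the paper does not give (and incidentally uses the correct sign in the Gibbs kernel, which the paper's displayed $K=\exp(M/\mu)$ gets backwards).
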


\begin{proof}
We use the proof by contradiction. Assume there is an entry $A_{i'j'}$ in $A^*$ outside the window, i.e., $|i'-j'|>W$, and $A_{i'j'}>0$.
Then we have the cost $C=\sum A^*_{ij}M_{ij} \geq A_{i'j'}\times M_{i'j'} \to\infty$.
It is easy to find another path $i'\to k_1\to k_2\to\cdots\to k_n\to j'$, where $max(|i'-k_1|,|k_t-k_{t-1}|,|j'-k_n|)\leq W$.
In this new transport plan $A'$ simply we have $\sum A'_{ij}M_{ij} < \infty$, which means $A^*$ is not the optimal transport plan and contradicts our basic assumption.
Hence, by applying this kind of cost function we can obtain a window-style solution.
\end{proof}

\section{Complexity Analysis}
\label{sec:comp_analysis}
We conduct a simple analysis of the computational complexity of~\modelname~and~Modal-Trans.
We are concern about the stage that occupies the most time in one training epoch---training on the missing split when the ADL works in the decoding mode.
Suppose the average sequence length, the embedding dimension, the window size are $l$, $d$ and $w$ (here $w$ stands for the value of $2W+1$ for simplicity), respectively. 
The complexity (number of multiplication operations) of the alignment dynamics fitter is the summation of the complexity from GRU and the linear projection layer:
\begin{equation}
O(c_1ld^2)+O(c_2wld)\approx O(ld^2)
\end{equation}
The time spent on the alignment dynamics solver can be ignored since it is a non-parametric module so that no gradients are back-propagated through it and the number of iterations required for convergence is very little (about 5).
The complexity of the transformer decoder is the summation of the complexity from encoder-decoder attention, encoder \& decoder self-attention, and linear projections:
\begin{equation}
O(c_1l^3d)+O(c_2ld^2)+O(c_3l^2d) \approx O(l^3d) > O(ld^2)
\end{equation}
The last inequality is an empirical conclusion, since in our experiments $l\approx 10$ while $d=32$ in most situations.

Particularly, the complexity of encoder-decoder attention can be calculated by the summation of $l$ times individual attention in the decoding procedure:
\begin{equation}
    O(\sum_{i=1}^l (ild+ild)) = O((1+l)l\times ld) \approx O(l^3d)
\end{equation}
It should be highlighted that the computation only counts the number of multiplications into account.
Since sequence-to-sequence decoding can not be paralleled, it takes more time to train.

\section{Inference Speed}
\label{sec:inf_speed}
As we mentioned before, the most competitive baseline, Modal-Trans, is a variant of the most advanced sequence-to-sequence model. 
Apart from the performance improvement,~\modelname~also speeds up the training process.
To show this, we run and calculate the average batch training time between~\modelname~and Modal-Trans. As shown in Table~\ref{tab:speed},~\modelname~achieves over 3$\times$ training acceleration over Modal-Trans but can produce sequential imputation of higher quality.
We also provide an estimation for the computational complexity in the appendix.
\begin{table}[h]
    \small
    \centering
    \resizebox{\linewidth}{!}{
    \begin{tabular}{c|c|c|c}
    \toprule
     Model &  CMU-MOSI & CMU-MOSEI & MELD \\
     \midrule
     Modal-Trans & 0.811 & 1.270 & 0.954  \\
     \modelname~(window size=8) & 0.278 & 0.340 & 0.312 \\
     \bottomrule
    \end{tabular}
    }
    \caption{The average training time of the imputation module (seconds) per batch.}
    \label{tab:speed}
\end{table}

\section{Additional Results}
In the main text, we present the results of the minimum $p$ in both settings.
Here we also provide the results when tested in setting A for the two preservation in Table~\ref{table:mosi,appendix}, \ref{table:mosei,appendix} and~\ref{table:meld_appendix}.
\renewcommand{\arraystretch}{1.001}
\begin{table*}[ht]
    \centering
    \small
    \resizebox{\linewidth}{!}{
    \begin{tabular}{l*{3}{|c c | c c}}
        \toprule
        \multirow{3}{*}{Method} & 
        \multicolumn{4}{c|}{T $\rightarrow$ V} & \multicolumn{4}{c|}{V$\rightarrow$A}  & \multicolumn{4}{c}{A$\rightarrow$T}   \\
        \cline{2-13}
           ~ & \multicolumn{2}{c|}{10\%} & \multicolumn{2}{c|}{50\%} & \multicolumn{2}{c|}{10\%} & \multicolumn{2}{c|}{50\%} & \multicolumn{2}{c|}{10\%} & \multicolumn{2}{c}{50\%} \\
         \cline{2-13}
         ~ & \textbf{MAE$\downarrow$} & \textbf{Acc-2$\uparrow$} & \textbf{MAE$\downarrow$} & \textbf{Acc-2$\uparrow$} & \textbf{MAE$\downarrow$} & \textbf{Acc-2$\uparrow$} & \textbf{MAE$\downarrow$} & \textbf{Acc-2$\uparrow$} & \textbf{MAE$\downarrow$} & \textbf{Acc-2$\uparrow$} & \textbf{MAE$\downarrow$} & \textbf{Acc-2$\uparrow$} \\
        \midrule
        Supervised-Single (LB) & 1.242 & 68.6 & 1.242 & 68.6 & 1.442 & 46.4 & 1.442 & 46.4 & 1.440 & 42.2 & 1.440 & 42.2 \\
        Supervised-Both (UB) & 1.019 & 77.7 & 1.019 & 77.7 & 1.413 & 57.8 & 1.413 & 57.8 & 1.081 & 75.8 & 1.081 & 75.8 \\
        \cline{1-13}
        MFM & 1.103 & 71.0 & 1.098 & 73.1 & 1.456 & 43.5 & 1.471 & 42.2  & 1.477 & 42.2 & 1.451 & 42.7 \\
        SMIL & 1.073 & 74.2 & 1.060 & 75.0 & 1.442 & 45.9 & 1.471 & 42.7 & 1.447 & 43.3 & 1.473 & 45.3 \\
        Modal-Trans & 1.052 & 75.5 & 1.031 & 75.9 & 1.428 & 49.4 & 1.417 & 51.1 & 1.435 & 48.7 & 1.415 & 53.7 \\
        \cline{1-13}
        MM-Align (Ours) & \textbf{1.028} & \textbf{76.9} & \textbf{1.015} & \textbf{77.1} & \textbf{1.416} & \textbf{52.0}& \textbf{1.410} & \textbf{53.2} & \textbf{1.426} & \textbf{51.5} & \textbf{1.414} & \textbf{54.9}\\
        \midrule
        \multirow{3}{*}{} & 
        \multicolumn{4}{c|}{V $\rightarrow$ T} & \multicolumn{4}{c|}{A$\rightarrow$V}  & \multicolumn{4}{c}{T$\rightarrow$A}   \\
        \cline{2-13}
           ~ & \multicolumn{2}{c|}{10\%} & \multicolumn{2}{c|}{50\%} & \multicolumn{2}{c|}{10\%} & \multicolumn{2}{c|}{50\%} & \multicolumn{2}{c|}{10\%} & \multicolumn{2}{c}{50\%} \\
         \cline{2-13}
         ~ & \textbf{MAE$\downarrow$} & \textbf{Acc-2$\uparrow$} & \textbf{MAE$\downarrow$} & \textbf{Acc-2$\uparrow$} & \textbf{MAE$\downarrow$} & \textbf{Acc-2$\uparrow$} & \textbf{MAE$\downarrow$} & \textbf{Acc-2$\uparrow$} & \textbf{MAE$\downarrow$} & \textbf{Acc-2$\uparrow$} & \textbf{MAE$\downarrow$} & \textbf{Acc-2$\uparrow$} \\
        \midrule
        Supervised-Single (LB) & 1.442 & 46.4 & 1.442 & 46.4 & 1.440 & 42.2 & 1.440 & 42.2 & 1.242 & 68.6 & 1.242 & 68.6 \\
        Supervised-Both (UB) & 1.019 & 77.7 & 1.019 & 77.7 & 1.413 & 57.8 & 1.413 & 57.8 & 1.081 & 75.8  & 1.081 & 75.8 \\
        \cline{1-13}
        MFM & 1.446 & 45.5 & 1.429 & 48.3 & 1.454 & 42.2 & 1.467 & 42.2 & 1.078 & 72.9 & 1.083 & 73.3 \\
        SMIL & 1.448 & 44.2 & 1.461 & 46.1 & 1.442 & 45.9 & 1.441 & 46.4 & 1.060 & 75.5 & 1.091 & 74.9 \\
        Modal-Trans & 1.429 & 50.1 & \textbf{1.398} & 54.2 & 1.439 & 47.4 & 1.431 & 52.5 & 1.052 & 75.2 & 1.028 & \textbf{76.7} \\
        \cline{1-13}
        MM-Align (Ours) & \textbf{1.415} & \textbf{52.7} & 1.399 & \textbf{55.4} & \textbf{1.427}& \textbf{49.9} & \textbf{1.413} & \textbf{56.6} & \textbf{1.028} & \textbf{76.7} & \textbf{1.025} & \textbf{76.7} \\
        \bottomrule
    \end{tabular}
    }
    \caption{CMU-MOSI results in setting A (Fig.~\ref{ours}), where $p=10\%$ and $50\%$.}
    \label{table:mosi,appendix}
\end{table*}

\renewcommand{\arraystretch}{1.005}
\begin{table*}[ht]
    \centering
    \small
    \resizebox{\linewidth}{!}{
    \begin{tabular}{l*{3}{|c c | c c}}
        \toprule
        \multirow{3}{*}{Method} & 
        \multicolumn{4}{c|}{T $\rightarrow$ V} & \multicolumn{4}{c|}{V$\rightarrow$A}  & \multicolumn{4}{c}{A$\rightarrow$T}   \\
        \cline{2-13}
           ~ & \multicolumn{2}{c|}{10\%} & \multicolumn{2}{c|}{50\%} & \multicolumn{2}{c|}{10\%} & \multicolumn{2}{c|}{50\%} & \multicolumn{2}{c|}{10\%} & \multicolumn{2}{c}{50\%} \\
         \cline{2-13}
         ~ & \textbf{MAE$\downarrow$} & \textbf{Acc-2$\uparrow$} & \textbf{MAE$\downarrow$} & \textbf{Acc-2$\uparrow$} & \textbf{MAE$\downarrow$} & \textbf{Acc-2$\uparrow$} & \textbf{MAE$\downarrow$} & \textbf{Acc-2$\uparrow$} & \textbf{MAE$\downarrow$} & \textbf{Acc-2$\uparrow$} & \textbf{MAE$\downarrow$} & \textbf{Acc-2$\uparrow$} \\
        \midrule
        Supervised-Single (LB) & 0.687  & 77.4 & 0.687 & 77.4 & 0.836 & 61.3 & 0.836 &  61.3 & 0.851 & 62.9 & 0.851 & 62.9 \\
        Supervised-Both (UB) & 0.615 & 81.3 & 0.615 & 81.3 & 0.707 & 79.5 & 0.707 & 79.5 & 0.613 & 80.9 & 0.613 & 80.9 \\
        \cline{1-13}
        MFM & 0.658 & 79.2 & 0.641 & 78.7 & 0.827 & 60.7 & 0.816 & 62.4 & 0.830 & 64.5 & 0.836 & 63.5 \\
        SMIL & 0.680 & 78.3 & 0.654 & 78.5 & 0.819 & 64.3 & 0.815  & 64.6 & 0.840 & 62.9 & 0.835 & 63.5 \\
        Modal-Trans & 0.645 & 79.6 & 0.641 & 79.5 & 0.818 & 64.7 & 0.814 & 64.7 & 0.827 & 64.9 & 0.820 & 64.7 \\
        \cline{1-13}
        MM-Align (Ours) & \textbf{0.637} & \textbf{80.8} & \textbf{0.623} & \textbf{81.0} & \textbf{0.811} & \textbf{65.9} & \textbf{0.808} & \textbf{66.1} & \textbf{0.824} & \textbf{65.3} & \textbf{0.817} & \textbf{65.7} \\
        \midrule
        \multirow{3}{*}{} &  
        \multicolumn{4}{c|}{V $\rightarrow$ T} & \multicolumn{4}{c|}{A$\rightarrow$V}  & \multicolumn{4}{c}{T$\rightarrow$A}   \\
        \cline{2-13}
           ~ & \multicolumn{2}{c|}{10\%} & \multicolumn{2}{c|}{50\%} & \multicolumn{2}{c|}{10\%} & \multicolumn{2}{c|}{50\%} & \multicolumn{2}{c|}{10\%} & \multicolumn{2}{c}{50\%} \\
         \cline{2-13}
         ~ & \textbf{MAE$\downarrow$} & \textbf{Acc-2$\uparrow$} & \textbf{MAE$\downarrow$} & \textbf{Acc-2$\uparrow$} & \textbf{MAE$\downarrow$} & \textbf{Acc-2$\uparrow$} & \textbf{MAE$\downarrow$} & \textbf{Acc-2$\uparrow$} & \textbf{MAE$\downarrow$} & \textbf{Acc-2$\uparrow$} & \textbf{MAE$\downarrow$} & \textbf{Acc-2$\uparrow$} \\
        \midrule
        Supervised-Single (LB) & 0.836 & 61.3 & 0.836 & 61.3 & 0.851 & 62.9 & 0.851 & 62.9 & 0.687 & 77.4 & 0.687 & 77.4 \\
        Supervised-Both (UB) & 0.615 & 81.3 & 0.615 & 81.3 & 0.707 & 79.5 & 0.707 & 79.5 & 0.613 & 80.9 & 0.613 & 80.9 \\
        \cline{1-13}
        MFM & 0.821 & 62.0 & 0.820 & 64.5 & 0.842 & 62.7 & 0.835 & 62.4 & 0.658 & 79.1 & 0.659 & 78.9 \\
        SMIL & 0.820 & 63.1 & 0.817 & 63.5 & 0.838 & 63.2 & 0.829 & 64.2 & 0.684 & 78.5 & 0.658 & 79.4  \\
        Modal-Trans & 0.817 & 64.9 & 0.815 & 64.9 & 0.832 & 64.6 & 0.825 & 64.7 & 0.643 & 79.9 & 0.648 & 79.7 \\
        \cline{1-13}
        MM-Align (Ours) & \textbf{0.812} & \textbf{65.2} & \textbf{0.807} & \textbf{66.9} & \textbf{0.822} & \textbf{65.4} & \textbf{0.819} & \textbf{66.0} & \textbf{0.635} & \textbf{81.0} & \textbf{0.626} & \textbf{80.9} \\
        \bottomrule
    \end{tabular}
    }
    \caption{CMU-MOSEI results in Setting A ($p=10\%$ and $50\%$).}
    \label{table:mosei,appendix}
\end{table*}

\renewcommand{\arraystretch}{1.02}
\begin{table}[h]
    \centering
    \small
    \resizebox{\linewidth}{!}{
    \begin{tabular}{l*{3}{|c|c}}
        \toprule
        \multirow{2}{*}{Method} &
         10\% & 50\% &  10\% & 50\% & 10\% & 50\% \\
        \cline{2-7}
        ~ & \multicolumn{2}{c|}{T $\rightarrow$ V} & \multicolumn{2}{c|}{V$\rightarrow$A}  & \multicolumn{2}{c}{A$\rightarrow$T}   \\
        \midrule
        Supervised-Single (LB) & 54.0 & 54.0 & 31.3 & 31.3 & 31.3 & 31.3  \\
        Supervised-Both (UB)  & 55.8 & 55.8 & 32.1 & 32.1 & 55.9 & 55.9 \\
        \cline{1-7}
        MFM    & 54.0 & 54.0 & 31.3 & 31.3 & 31.3 & 43.1 \\
        SMIL   & 54.1 & 54.4 & 31.3 & 31.3 & 31.3 & 43.5\\
        Modal-Trans & 54.2 & 55.0 & 31.3 & 31.4 & 31.5 & 44.4 \\
        \cline{1-7}
        MM-Align (Ours) & \textbf{54.2} & \textbf{55.7} & 31.3 & \textbf{31.9} & 31.5 & \textbf{45.5}  \\
        \cline{1-7}
        ~ & \multicolumn{2}{c|}{V $\rightarrow$ T} & \multicolumn{2}{c|}{A $\rightarrow$V}  & \multicolumn{2}{c}{T $\rightarrow$A}   \\
        \midrule
        Supervised-Single (LB) & 31.3 & 31.3 & 31.3 & 31.3 & 54.0 & 54.0 \\
        Supervised-Both (UB)  & 55.8 & 55.8 & 32.1 & 32.1 & 55.9 & 55.9  \\
        \cline{1-7}
        MFM & 31.4 & 43.6 & 31.3 & 31.3 & 54.2 & 54.1   \\
        SMIL & 31.4 & 43.9 & 31.3 & 31.3 & 54.5 & 54.2   \\
        Modal-Trans & 31.6 & 44.2 & 31.3 & 31.3 & 55.0 & 54.8  \\
        \cline{1-7}
        MM-Align (Ours) & \textbf{32.3} & \textbf{45.4} & 31.3 & \textbf{32.0} & \textbf{55.6} & \textbf{55.7}  \\
        \bottomrule
    \end{tabular}
    }
    \caption{Results on MELD~($p=10\%$ and $50\%$) in setting A.}
    \label{table:meld_appendix}
\end{table}
\end{document}